\documentclass{article}

\PassOptionsToPackage{numbers, compress}{natbib}



    \usepackage[final]{neurips_2020}


\usepackage[utf8]{inputenc} 
\usepackage[T1]{fontenc}    
\usepackage[hidelinks]{hyperref}       
\usepackage{url}            
\usepackage{booktabs}       
\usepackage{amsfonts}       
\usepackage{nicefrac}       
\usepackage{microtype}      

\usepackage{algorithm}
\usepackage[noend]{algorithmic}
\usepackage{todonotes}
\usepackage{amsfonts}
\usepackage{amsmath}
\usepackage{amsthm}
\usepackage{amssymb}
\usepackage[capitalise]{cleveref}

\usepackage{caption}
\usepackage{subcaption}
\usepackage{wrapfig}
\usepackage{xcolor}
\usepackage{multirow}
\usepackage{enumitem}
\usepackage{float}
\usepackage{xspace}
\usepackage{thm-restate}

\usepackage{printlen}
\uselengthunit{cm}
\newlength\imageheight
\newlength\imagewidth

\newcommand{\backupsmall}{\vspace{-0.1cm}}
\newcommand{\backupmedium}{\vspace{-0.15cm}}

\setlength{\abovedisplayskip}{2pt}
\setlength{\belowdisplayskip}{2pt}


\crefname{equation}{}{} 
\crefname{section}{Sec.}{Sec.}
\crefname{algorithm}{Alg.}{Alg.}
\crefname{ALC@unique}{Line}{Lines}

\DeclareMathOperator*{\argmax}{arg\,max}

\newtheorem{assumption}{Assumption}
\newtheorem{definition}{Definition}

\newcommand{\mypar}[1]{\textbf{#1.}}
\newcommand{\cisr}{\textsc{CISR}\xspace}
\newcommand{\M}{\mathcal{M}}
\newcommand{\Ss}{\mathcal{S}}
\newcommand{\A}{\mathcal{A}}
\newcommand{\D}{\mathcal{D}}
\newcommand{\T}{\mathcal{T}}
\newcommand{\Pp}{\mathcal{P}}
\newcommand{\R}{\mathcal{R}}
\newcommand{\Oo}{\mathcal{O}}
\newcommand{\Z}{\mathcal{Z}}

\title{Safe Reinforcement Learning\\ via Curriculum Induction}

%

\author{%
  Matteo Turchetta\thanks{The author did part of this work while at Microsoft Research, Redmond.} \\
  Department of Computer Science\\
  ETH Zurich\\
  \texttt{matteotu@inf.ethz.ch} \\
  \And
  Andrey Kolobov \\
  Microsoft Research \\
  Redmond, WA-98052 \\
   \texttt{akolobov@microsoft.com} \\
  \AND
  Shital Shah \\
  Microsoft Research \\
  Redmond, WA-98052 \\
   \texttt{shitals@microsoft.com}
  \And
  Andreas Krause \\
  Department of Computer Science\\
  ETH Zurich \\
  \texttt{krausea@ethz.ch}
  \And
  Alekh Agarwal \\
  Microsoft Research \\
  Redmond, WA-98052 \\
  \texttt{alekha@microsoft.com}
}

\newcommand{\comment}[1]{}

\begin{document}

\setlength{\abovedisplayskip}{2pt}
\setlength{\belowdisplayskip}{2pt}

\maketitle
\begin{abstract}
In safety-critical applications, autonomous agents may need to learn in an environment where mistakes can be very costly. In such settings, the agent needs to behave safely not only \emph{after} but also \emph{while} learning. To achieve this, existing safe reinforcement learning methods make an agent rely on priors that let it avoid dangerous situations during exploration with high probability, but both the probabilistic guarantees and the smoothness assumptions inherent in the priors are not viable in many scenarios of interest such as autonomous driving. This paper presents an alternative approach inspired by human teaching, where an agent learns under the supervision of an automatic instructor that saves the agent from violating constraints during learning. In this new model, the instructor needs to know neither how to do well at the task the agent is learning, nor how the environment works. Instead, it has a library of reset controllers that it activates when the agent starts behaving dangerously, preventing it from doing damage. Crucially, the choices of which reset controller to apply in which situation affect the speed of agent learning. Based on observing agents' progress, the teacher itself learns a policy for choosing the reset controllers, \emph{a curriculum}, to optimize the agent's final policy reward. Our experiments use this framework in two challenging environments to induce curricula for safe and efficient learning. 
\end{abstract}

\backupmedium
\section{Introduction \label{sec:intro}}
\backupsmall
Safety is a major concern that prevents application of reinforcement learning (RL)~\citep{sutton2018reinforcement} to many practical problems~\citep{dulac2019challenges}. Among the RL safety notions studied in the literature \citep{garcia2015comprehensive}, ensuring that the agent does not violate constraints is perhaps the most important. Consider, for exmple, training a policy for a self-driving car's autopilot. Although simulations are helpful, much of the training needs to be done via a process akin to RL on a physical car~\citep{kendall2019driving}. 
At that stage, it is critical to avoid damage to property, people, and the car itself. Safe RL techniques aim to achieve this primarily by imparting the agent with priors about the environment and equipping it with sound ways of updating this information with observations \citep{el2016convex,berkenkamp2017safe,chow2018lyapunov,chow2019lyapunov}. Some do this heuristically \citep{achiam2017constrained}, while others provide safety guarantees through demonstrations \cite{thananjeyan2020safety}, or by assuming access to fairly accurate dynamical models~\citep{akametalu2014reachability}, or at the cost of smoothness assumptions~\citep{berkenkamp2017safe,koller2018learning,turchetta2016safe,turchetta2019safe}. These assumptions hold, e.g., in certain drone control scenarios \citep{berkenkamp2016safe} but are violated in settings such as autonomous driving, where a small delta in control inputs can make a difference between safe passage and collision.
 
In this paper, we propose \textbf{\textit{C}}urriculum \textbf{\textit{I}}nduction for \textbf{\textit{S}}afe \textbf{\textit{R}}einforcement learning (\cisr, \emph{``Caesar''}), a safe RL approach that lifts several prohibitive assumptions of existing ones. \cisr\ is motivated by the fact that, as humans, we successfully overcome challenges similar to those in the autopilot training scenario when we help our children learn safely. Children possess inaccurate notions of danger, have difficulty imitating us at tasks requiring coordination, and often ignore or misunderstand requests to be careful. Instead, e.g., when they learn how to ride a bicycle, we help them do it safely by first equipping the bicycle with training wheels, then following the child while staying prepared to catch them if they fall, finally letting them ride freely but with elbow and knee guards for some time, and only then allowing them to ride like grown-ups. Importantly, each ``graduation" to the next stage happens based on the learner's observed performance under the current safeguard mechanism.

\mypar{Key ideas} In \cisr, an artificial teacher helps an agent (student) learn potentially dangerous skills by inducing a sequence of safety-ensuring training stages called \emph{curriculum}. A student is an RL agent trying to learn a policy for a constrained MDP (CMDP)~\cite{altman1999constrained}. A teacher has a decision rule -- a \emph{curriculum policy} -- for constructing a curriculum for a student given observations of the student's behavior. Each curriculum stage lasts for some number of RL steps of the student and is characterized by an \emph{intervention} (e.g., the use of training wheels) that the teacher commits to use throughout that stage. Whenever the student runs the risk of violating a constraint (falling off the bike), that stage's intervention automatically puts the agent into a safe state (e.g., the way training wheels keep the bike upright), in effect by temporarily overriding the dynamics of the student's CMDP. The teacher's curriculum policy chooses interventions from a pre-specified set such as \{\emph{use of training wheels, catching the child if they fall, wearing elbow and knee guards}\} with the crucial property that any single intervention from this set keeps the agent safe as described above. A curriculum policy that commits to any one of these interventions for the entire learning process is sufficient for safety, but note that in the biking scenario we don't keep the training wheels on the bike forever: at some point they start hampering the child's progress. Thus, the teacher's natural goal is to {\em optimize the curriculum policy} with respect to the student's policy performance at the end of the learning process, assuming the process is long enough that the student's rate of attempted constraint violations becomes very small. In \cisr, the teacher does this via a round-based process, by playing a curriculum policy in every round, observing a student learn under the induced curriculum, evaluating its performance, and trying an improved curriculum policy on a new student in the next round.

\mypar{Related Work} \cisr\ is a form of {\em curriculum learning (CL)} \citep{portelas2020automatic}. CL and {\em learning from demonstration (LfD)} \citep{argall2009ras} are two classes of approaches that rely on a teacher as an aid in training a decision-making agent, but \cisr\ differs from both. In LfD, a teacher provides demonstrations of a good policy for the task at hand, and the student uses them to learn its own policy by behavior cloning \citep{pomerleau89alvinn}, online imitation \citep{Osa_2018}, or apprenticeship learning \citep{abbeel2004irl}. In contrast, \cisr\ does not assume that the teacher has a policy for the student's task at all: e.g., a teacher doesn't need to know how to ride a bike in order to help a child learn to do it. CL generally relies on a teacher to structure the learning process. A range of works~\citep{narvekar2016source, florensa2017reverse, florensa2017automatic, riedmiller2018learning, wu2016training, portelas2019cts,wang2019poet} explore ways of building a curriculum by modifying the learning environment. \cisr\ is closer to \citet{graves2017automated}, which uses a fixed set of environments for the student and also uses a bandit algorithm for the teacher, and to \cite{narvekar2019learning}, which studies how to make the teacher's learning problem tractable. \cisr's major differences from existing CL work is that (1) it is the first approach, to our knowledge, that uses CL for {\em ensuring safety} and (2) uses \emph{multiple} students for training the teacher, which allows it to induce curricula in a more data-driven, as opposed to heuristic, way. Regarding safe RL, in addition to the literature mentioned above,~\citet{le2019batch}, which proposes a CMDP solver, considers the same training and test safety constraints as ours. In that work, the student avoids potentially unsafe environment interaction by learning from batch data, which places strong assumptions on MDP dynamics and data collection policy 
neither verifiable nor easily satisfied in practice~\citep{scherrer2014approximate,chen2019information,agarwal2019optimality}. We use the same solver, but in an online setting.

\looseness -1 The ideas introduced in this work may be applicable in several kinds of safety-sensitive settings where \cisr\ can be viewed as a {\em meta-learning framework} \citep{vanschoren2018meta}, with curriculum policy as a ``hyperparameter" being optimized. In our experiments, the number of iterations \cisr\ needs to learn a good curriculum policy is small. This allows its use in robotics, where a curriculum policy is trained on agents with one set of sensors and applied to training agents with different sensors of similar capabilities, e.g., as in \citet{pan2017agile} for autonomous rovers. Further promising scenarios are training 
a curriculum policy in simulation and applying it to physical agents and using \cisr\ in {\em intelligent tutoring systems} \citep{clement2015}. 

\mypar{Contributions} Our main contributions are: (1) We introduce \cisr, a novel framework for exploiting prior knowledge to guarantee safe training and deployment in RL that forgoes many unrealistic assumptions made in the existing safe RL literature. (2) We present a principled way of optimizing curriculum policies across generations of students while guaranteeing safe student training. (3) We show empirically in two environments that students trained under \cisr-optimized curricula attain reward performance comparable or superior to those trained without a curriculum and remain safe throughout training, while those trained without a curriculum don't. (4) We release an open source implementation of \cisr and of our experiments\footnote{\url{https://github.com/zuzuba/CISR\_NeurIPS20}}.

\backupmedium
\section{Background: Constrained Markov Decision Processes} \label{sec:problem_statement}
\backupsmall
\looseness -1 In this work, we view a learning agent, which we will call a \emph{student},  as performing constrained RL. This framework has been strongly advocated as a promising path to RL safety \citep{ray2019benchmarking}, and expresses safety requirements in terms of an \textit{a priori unknown} set of feasible safe policies that the student should optimize over. In practice, this feasible policy set is often described by a {\em constrained Markov decision process (CMDP)} \citep{altman1999constrained}. We consider CMDPs of the form $\M=\langle \mathcal{S}, \mathcal{A}, \Pp,r, \D\rangle$, where $\mathcal{S}$ and $\mathcal{A}$ are a state and action space, respectively, $\Pp(s'|s,a)$ is a transition kernel, $r:\mathcal{S} \times \mathcal{A}  \times \mathcal{S}\rightarrow \mathbb{R}$ is a reward function, and $\D$ is a set of unsafe terminal states. We focus on settings where safety corresponds to avoiding visits to the set $\D$. The student's objective, then, is to find a policy $\pi:\mathcal{S}\rightarrow \Delta_{\mathcal{A}}$, i.e., a mapping from states to action distributions, that solves the following constrained optimization problem, where $\rho^\pi$ is a distribution of trajectories induced by $\pi$ and $\Pp$ given some fixed initial state distribution: 
\begin{equation}
    \pi^*=\argmax_\pi~ \mathbb{E}_{\rho^{\pi}} \sum_{t=0}^T r(s_t, a_t, s_{t+1}),~~~ \textrm{s.t.}~~~ \mathbb{E}_{\rho^{\pi}} \sum_{t=0}^T \mathbb{I}(s_t\in \D) \leq \kappa, \label{eq:CMDP}
\end{equation}

where $\mathbb{I}$ is the indicator function. To ensure complete safety, we restrict our attention to problems where the value of $\kappa$ makes the constraints feasible. 
While we have presented the finite-horizon undiscounted version of the problem, both the objective and the constraint can be expressed as the average or a discounted sum over an infinite horizon. For a generic CMDP $\M$ we denote the set of its feasible policies as $\Pi_{\M}$, and the value of any $\pi \in \Pi_{\M}$ as $V_{\M}(\pi)$.

There are a number of works on solving CMDPs that find a nearly feasible and optimal policy with sufficiently many trajectories, but violate constraints during training~\cite{achiam2017constrained,chow2019lyapunov}. In contrast, we aim to enable the student to learn a policy for CMDP $\M$ {\em without violating any constraints in the process}.

\backupmedium
\section{Curriculum Induction for Safe RL \label{sec:cisr}}
\backupsmall

\looseness -1 We now describe \cisr, our framework for enabling a student to learn without violating safety constraints. To address the seemingly impossible problem of students learning safely in an unknown environment, \cisr\ includes a \emph{teacher}, which is a learning agent itself. The teacher serves two purposes: (1) mediating between the student and its CMDP in order to keep the student safe, and (2) learning a mediation strategy -- a curriculum policy -- that helps the student learn faster. First we formally describe the teacher's mediation tools called \emph{interventions} (Section \ref{sec:inter}). Then, in Section \ref{sec:student} we show how, from the student's perspective, each intervention corresponds to a special CMDP where training is safe and every feasible policy is also feasible in original CMDP, ensuring that objective (1) is always met. Finally, in Section \ref{sec:Tproblem} we consider the teacher's perspective and show how, in order to optimize objective (2), it iteratively improves its curriculum policy by trying it out on different students.

\subsection{Interventions \label{sec:inter}}

In \cisr, the teacher has a set of \emph{interventions} $\mathcal{I} = \{\langle \D_i, \T_i\rangle\}_{i=1}^K$. Hereby, each intervention is defined by a set $\D_i \subset \mathcal{S}$ of \emph{trigger states} where this intervention applies and $\T_i: \mathcal{S} \rightarrow \Delta_{\mathcal{S} \setminus \D_i}$, a state-conditional reset distribution. The semantics of an intervention is as follows. From the teacher's perspective, $\D_i$ is a set of undesirable states, either because $\D_i$ intersects with the student CMDP's unsafe state set $\D$ or because the student's current policy may easily lead from $\D_i$'s states to $\D$'s. Whenever the student enters a state $s \in \D_i$, the teacher can \emph{intervene} by resetting the student to another, safe state according to distribution $\T_i(\cdot|s)$. We assume the following about the interventions:

\begin{assumption}[\textbf{\emph{Intervention set}}] a) The intervention set $\mathcal{I}$ is given to the teacher as input and is fixed throughout learning. b) The interventions in $\mathcal{I}$ cannot be applied after student learning.
\end{assumption}

Assumption a) is realistic in many settings, where the student is kept safe by heuristics in the form of simple controllers such as those that prevent drones from stalling. At least one prior work, \citet{eysenbach2017leave}, focuses on how safety controllers can be learned, although their safety notion (policy reversibility) is much more specialized than in \cisr. Assumption b) is realistic in that a safety controller practical enough to be used beyond training is likely to be part of the agent as in \cite{alshiekh2018shielding}, removing the need for safety precautions during training. An example of a safety mechanism that cannot be used beyond training is motion capture lab equipment to prevent collisions. 


\subsection{The student's problem \label{sec:student}}

We now describe the student's learning process under single and multiple interventions of the teacher. As we explain here, training in the presence of an interventions-based teacher can be viewed as learning in a {\em sequence} of CMDPs that guarantee student safety under simple conditions. Later, in \cref{sec:Tproblem}, we formalize these CMDP sequences as {\em curricula}, and show how the teacher can induce them using a {\em curriculum policy} in order to accelerate students' learning progress.

\textbf{Intervention-induced CMDPs.} Fix an intervention $\langle \D_i, \T_i \rangle$ and suppose the teacher commits to using it throughout student learning. As long as the student avoids states in $\D_i$, deemed by the teacher too dangerous for the student's ability, the student's environment works like the original CMDP, $\mathcal{M}$. But whenever the student enters an $s \in \D_i$, the teacher leads it to a safe state $s' \sim \T_i(\cdot|s), s' \notin \D_i$. 

\looseness -1 Thus, each of teacher's interventions $i\in \mathcal{I}$ induces a student CMDP $\M_i=\langle \mathcal{S}, \A, \Pp_i,r_i,\D, \D_i \rangle$, where $\mathcal{S}$ and $\A$ are as in the original CMDP $\M$, but the dynamics are different: for all $a \in \A$, $\Pp_i(s'|s,a)=\Pp(s'|s,a)$ for all $s \in \mathcal{S} \setminus \D_i$  and $\Pp_i(s'|s,a) = \T_i(s'|s)$ for $s \in \D_i$. The reward function is modified to assign $r_i(s,a, s') = 0$ for $s \in \D_i, s' \notin \D_i$: all student's actions in these cases get overridden by the teacher's intervention, having no direct cost for the student. However, the teacher cannot supervise the student forever; the student must learn a safe and high-return policy that does not rely on its help. We thus introduce a constraint on the number of times the student can use the teacher's help. This yields the following problem formulation for the student, where $\rho_i^\pi$ is a distribution of trajectories induced by $\pi$ given some fixed initial state distribution and the modified transition function $\Pp_i$:
\begin{align}
\hspace*{-0.22cm}
\pi^*=\argmax \mathbb{E}_{\rho_i^\pi} \sum_{t=0}^T r_i(s_t, a_t, s'_{t+1}),~ \textrm{s.t.}~~ \mathbb{E}_{\rho_i^\pi} \sum_{t=0}^T \mathbb{I}(s_t \in \D) \leq \kappa_i, ~ \mathbb{E}_{\rho_i^\pi} \sum_{t=0}^T \mathbb{I}(s_t \in \D_i) \leq \tau_i
\label{eq:CMDP_intervention},
\end{align}
where $\kappa_i \geq 0$ and $\tau_i \geq 0$ are intervention-specific tolerances set by the teacher. Thus, although the student doesn't incur any cost for the teacher's interventions, they are associated with violations of teacher-imposed constraints. Our CMDP solver \cite{le2019batch}, discussed in Section \ref{sec:impl}, penalizes the student for them during learning, making sure that the student doesn't exploit them in its final policy.

By construction, each intervention-induced CMDP $\M_i$ has two important properties, which we state below and prove in \cref{sec:app:proof}. First, if the teacher has lower tolerance for constraint violations than the original CMDP, an optimal learner operating in $\M_i$ will eventually come up with a policy that is safe in its original environment $\M$:

\begin{restatable}[\textbf{\emph{Eventual safety}}]{proposition}{SafeDeployment} \label{prop:feasibility}
    Let $\Pi_\mathcal{M}$ and $\Pi_{\mathcal{M}_i}$ be the sets of feasible policies for the problems in Equations \cref{eq:CMDP} and \cref{eq:CMDP_intervention}, respectively. Then, if $\tau_i + \kappa_i \leq \kappa$, $\Pi_{\mathcal{M}_i} \subseteq \Pi_{\mathcal{M}}$.
\end{restatable}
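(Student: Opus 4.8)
The plan is to take an arbitrary $\pi\in\Pi_{\M_i}$ and verify that it satisfies the single constraint of \cref{eq:CMDP}. The first move is to use that the states in $\D$ are \emph{terminal}: a trajectory can enter $\D$ at most once, so under \emph{any} dynamics the constraint quantity $\mathbb{E}\sum_{t=0}^T\mathbb{I}(s_t\in\D)$ equals the probability of ever reaching $\D$. Hence it suffices to show $\mathbb{P}_{\rho^\pi}(\text{reach }\D)\le\kappa$, and by the hypothesis $\tau_i+\kappa_i\le\kappa$ it is in fact enough to bound this probability by $\tau_i+\kappa_i$.

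The one structural fact I would rely on is that $\Pp_i$ and $\Pp$ agree on $\mathcal{S}\setminus\D_i$: the two kernels differ only at states of $\D_i$, where $\Pp_i$ overrides the dynamics with the reset $\T_i$. Consequently the law of the trajectory up to and including the first time it lands in $\D_i$ is identical under $\rho^\pi$ and under $\rho_i^\pi$ (before that time the process lives in $\mathcal{S}\setminus\D_i$ and is driven by the common kernel, and any terminal state of $\D\setminus\D_i$ is handled the same way under both). I would record this as a one-line ``stopped-process'' coupling claim: two chains that agree on a set $U$ have the same law when stopped upon leaving $U$.

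With that in place, I would write $\{\text{reach }\D\}\subseteq\{\text{reach }\D\text{ without ever entering }\D_i\}\cup\{\text{ever enter }\D_i\}$ and bound the $\rho^\pi$-probability of each piece by a quantity controlled by feasibility of $\pi$ in $\M_i$. For the first event, the relevant portion of the trajectory never leaves $\mathcal{S}\setminus\D_i$, so by the coupling claim its probability is the same under $\rho_i^\pi$; but on that event no reset is ever triggered and the student still reaches $\D$, so its $\rho_i^\pi$-probability is at most $\mathbb{E}_{\rho_i^\pi}\sum_t\mathbb{I}(s_t\in\D)\le\kappa_i$. For the second event, ``ever entering $\D_i$'' is determined by the trajectory up to the first such entry, so again its probability is the same under $\rho_i^\pi$; there it forces at least one reset, hence its probability is at most $\mathbb{E}_{\rho_i^\pi}\sum_t\mathbb{I}(s_t\in\D_i)\le\tau_i$, using only $\mathbb{P}(N\ge 1)\le\mathbb{E}[N]$ for the nonnegative integer count $N$ of visits to $\D_i$. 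Adding the two bounds yields $\mathbb{P}_{\rho^\pi}(\text{reach }\D)\le\kappa_i+\tau_i\le\kappa$, i.e. $\pi\in\Pi_\M$.

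The step that requires the right idea rather than the right bookkeeping is resisting the temptation to control the trajectory's behavior \emph{after} it would have been reset under $\Pp_i$: that continuation has no clean relation to $\Pp_i$, since under $\Pp$ no reset occurs. The observation that makes the argument go through is that one need not control it at all --- bounding the probability of \emph{ever} entering $\D_i$ already suffices, which is exactly what the additive slack $\tau_i$ buys. A minor point to pin down is the convention for states in $\D\cap\D_i$ (whether such a state acts as terminal or as a reset trigger in $\M_i$); either convention is compatible with the argument, since all that is needed is that reaching $\D$ under $\Pp$ forces either a genuine $\D$-visit under $\Pp_i$ before any reset, or at least one reset.
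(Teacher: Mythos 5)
Your proof is correct and takes essentially the same route as the paper's: both split according to whether the trajectory ever enters $\D_i$, use the agreement of $\Pp$ and $\Pp_i$ up to the first trigger, use terminality of $\D$ to cap the visit count at one, and bound the two pieces by $\tau_i$ (probability of ever triggering, via $\mathbb{P}(N\geq 1)\leq \mathbb{E}[N]$) and $\kappa_i$ (reaching $\D$ without triggering), giving $\tau_i+\kappa_i\leq\kappa$. The difference is only presentational --- a union bound over events with a stopped-process coupling, versus the paper's explicit sums over trajectory prefixes $\xi_1$ and suffixes $\xi_2$.
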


Intuitively, once the teacher is removed, the student can fail either by passing through states where it used to be rescued or by reaching states where the teacher was not able to save it in the first place; $\tau_i + \kappa_i \leq \kappa$ ensures that the probability of either of these cases is sufficiently low that a feasible policy in $\M_i$ is also feasible in $\M$.
While this  guarantees the student's \emph{eventual} safety, it doesn't say anything about {\em safety during learning}. The second proposition states conditions for learning safely:

\begin{restatable}[\textbf{\emph{Learning safety}}]{proposition}{SafeTraining} \label{prop:learning_safety}
   Let $\D$ be the set of unsafe states of CMDPs $\M$ and $\M_i$, and let $\D_i$ be the set of trigger states of intervention $i$. If $\D \subseteq \D_i$ and $\Pp(s'|a, s) = 0$ for every $s' \in \D$, $s \in \mathcal{S} \setminus \D_i$, and $a \in \A$, then an optimal student learning in CMDP $\M_i$ will not violate any of $\M$'s constraints throughout learning.
\end{restatable}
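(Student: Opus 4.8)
The plan is to prove something slightly stronger than stated: \emph{any} student policy, optimal or not, that learns in $\M_i$ never visits a state in $\D$, so in particular $\sum_{t=0}^T \mathbb{I}(s_t \in \D) = 0$ along every trajectory and $\M$'s safety constraint in \cref{eq:CMDP} is satisfied trivially. The ``optimal student'' qualifier is therefore inessential; what matters is purely the structure of the modified transition kernel $\Pp_i$. The only thing to keep track of is that during learning the student may follow an arbitrary, exploratory policy, which is exactly why I would argue at the level of $\Pp_i$ rather than of any particular $\pi$.

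First I would unpack $\Pp_i$ from its definition and show that $\Pp_i(s' \mid s, a) = 0$ for every $s' \in \D$, every $s \in \mathcal{S}$, and every $a \in \A$. This is a two-case argument. If $s \in \mathcal{S} \setminus \D_i$, then $\Pp_i(\cdot \mid s, a) = \Pp(\cdot \mid s, a)$, and the hypothesis $\Pp(s' \mid s,a) = 0$ for all $s' \in \D$, $s \in \mathcal{S}\setminus\D_i$, $a \in \A$ gives the claim directly. If instead $s \in \D_i$, then $\Pp_i(\cdot \mid s, a) = \T_i(\cdot \mid s)$, and since $\T_i$ maps into $\Delta_{\mathcal{S} \setminus \D_i}$ its support lies in $\mathcal{S}\setminus\D_i$; using $\D \subseteq \D_i$, equivalently $\mathcal{S}\setminus\D_i \subseteq \mathcal{S}\setminus\D$, we get $\T_i(s'\mid s) = 0$ for every $s' \in \D$. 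Combining the two cases, no state--action pair in $\M_i$ assigns positive probability to $\D$.

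Next I would conclude by induction over the time step $t$ along any trajectory drawn from $\rho_i^\pi$ during learning. At $t = 0$ the student starts in a feasible state, hence $s_0 \notin \D$ (states in $\D$ are unsafe terminal states and are excluded from the initial distribution whenever \cref{eq:CMDP} is feasible). For the inductive step, regardless of whether $s_t \in \D_i$ and regardless of the action $a_t$ drawn by $\pi$, the previous paragraph shows $s_{t+1} \notin \D$ almost surely. Hence $s_t \notin \D$ for all $t$, so $\sum_{t=0}^{T} \mathbb{I}(s_t \in \D) = 0 \le \kappa$ on every trajectory, which is exactly the statement that $\M$'s constraint is never violated throughout learning.

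The argument is essentially mechanical once the kernel-level statement is isolated; the closest thing to an obstacle is making sure the two hypotheses \emph{jointly} cover the case split: the containment $\D \subseteq \D_i$ handles transitions out of trigger states (via the codomain of the reset distribution), while the ``no direct edge into $\D$ from outside $\D_i$'' hypothesis handles transitions out of non-trigger states, and neither alone suffices. A secondary point is to state explicitly the mild assumption that the initial-state distribution avoids $\D$, which is implicit in $\D$ being a set of unsafe terminal states.
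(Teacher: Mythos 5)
Your proof is correct and follows essentially the same route as the paper's: both establish the kernel-level fact that $\Pp_i$ assigns zero probability to $\D$ from every state (non-trigger states by the hypothesis on $\Pp$, trigger states via the codomain of $\T_i$ together with $\D \subseteq \D_i$), and then conclude that $\D$ could only be entered at $t=0$. The one small divergence is the initial state: you assume the initial distribution avoids $\D$ and justify this by feasibility of \cref{eq:CMDP}, which is slightly imprecise -- feasibility could also hold with $\Pr(s_0 \in \D) > 0$ provided $\kappa$ is large enough -- whereas the paper keeps both cases and observes that in the latter case this policy-independent initial mass on $\D$ must itself be tolerated by $\kappa$.
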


Informally, \cref{prop:learning_safety} says that if the set of trigger states $\D_i$ of the teacher's intervention ``blankets" the set of unsafe states $\D$, the student has no way of reaching states in $\D$ without triggering the intervention and being rescued first, and hence is safe even when it violates $\M_i$'s constraints.

\begin{assumption}[\textbf{\emph{Intervention safety}}]
In the rest of the paper, we assume all teacher interventions to meet the conditions of Proposition \ref{prop:learning_safety}.
\end{assumption}

We make this assumption for conceptual simplicity, but it can hold in reality: systems such as aircraft stall
prevention and collision avoidance guarantee near-absolute safety. Even in the absence thereof, \cisr informally keeps
the student as safe during training as teacher’s interventions allow. In \cref{sec:experiments}, we show that, even when this assumption is violated and the interventions cannot guarantee absolute safety, \cisr still improves training safety by three orders of magnitude over existing approaches.

\mypar{Sequences of intervention-induced CMDPs and knowledge transfer} As suggested by the biking example, the student's learning is likely to be faster under a \emph{sequence} of teacher interventions, resulting in a sequence of CMDPs $\M_{i_1}, \M_{i_2}, \ldots$. This requires a mechanism for the student to carry over previously acquired skills from one CMDP to the next. We believe that most knowledge transfer approaches for unconstrained MDPs, such as transferring samples \citep{lazaric2011transfer}, policies \citep{fernandez2010probabilistic}, models \citep{fachantidis2013transferring} and values \citep{taylor2005behavior}, can be applied to CMDPs as well, with the caveat that the transfer mechanism should be tailored to the environment, teacher's intervention set, and the learning algorithm the student uses. In Section~\ref{sec:impl}, we present the knowledge transfer mechanism used in our implementation.\\

\begin{wrapfigure}{R}{0.55\textwidth}
\vspace{-0.30in}
\begin{minipage}[t]{0.55\textwidth}
\centering
\begin{algorithm}[H]
\caption{\cisr}
\label{alg:cisr}
\begin{algorithmic}[1] 
        \STATE \textbf{Input}: Interventions $\mathcal{I}$, Initial teacher $\pi^T_0$
        \FOR {$j=0,1,\ldots, N_t$}
            \STATE $\pi_{0,j} \gets \texttt{get\_student}()$ \label{alg:line:Treset}
            \FOR{$n=0,1,\ldots,N_s$} \label{alg:line:InnerLoop}
                \STATE $\M_{i_n} \gets \pi^T_j(o^T_0,\ldots,o^T_n)$  \label{alg:line:Taction}
                \STATE \textbf{if} $n > 0$ \textbf{then} $\pi_{n,j} \gets \texttt{transfer} (\pi_{n-1,j})$ \label{alg:line:Stransfer}
                \STATE $\pi_{n,j} \gets \texttt{student.train}(\M_{i_n})$  
                \label{alg:line:Strain}
                \STATE $o^T_{n} \gets \phi(\pi_{n,j})$ \label{alg:line:Tobs}\label{alg:line:return}
                %
            \ENDFOR 
        \STATE $\pi^T_{j+1} \gets \texttt{teacher.train}(\{(\pi^T_k,\hat{V}(\pi_{N_s,k}))\}_{k=1}^j)$ \label{alg:Tlearn:line:Ttrain}
        \ENDFOR
    \end{algorithmic}
%
\end{algorithm}
\end{minipage}
\vspace{-0.2in}
\end{wrapfigure}

\subsection{The teacher's problem} \label{sec:Tproblem}

Given that, under simple conditions, any sequence of teacher's interventions will keep the student safe, the teacher's task is to sequence interventions/CMDPs for the student so that the student learns the highest-expected-reward policy. In \cisr, the teacher does this by iteratively trying a curriculum policy on different students and improving it after each attempt. This resembles human societies, where curriculum policies are implicitly learned through educating generations of students. At the same time, it is different from prior approaches such as \citet{graves2017automated,matiisen2017teacherstudent}, which try to learn and apply a curriculum on the same student. These approaches embed a fixed, heuristic curriculum policy within the teacher to induce a curriculum but cannot improve this policy over time. In contrast, \cisr exploits information from previous students to optimize its curriculum policy following a data-driven approach.

\mypar{What does \emph{not} need to be assumed of the teacher} \cisr\ makes very few assumptions about the teacher's abilities. 
In particular, while the teacher needs to be able to evaluate the student's progress, it can do so according to its internal notion of performance, which may differ from the student's one. Thus, the teacher does not need to know the student's reward. Moreover, since the interventions are given, the teacher does not need to know the student's dynamics. However, the intervention design may still require \emph{approximate} and \emph{local} knowledge of the dynamics, which is much less restrictive than \emph{perfect} and \emph{global} knowledge. Furthermore, the teacher does not need to have a policy for performing the task that the student is trying to learn nor to be able to communicate the set $\D_i$ of an intervention's trigger states for any $i$ to the student. It only needs to be able to {\em execute} any intervention in $\mathcal{I}$ without violating laws governing the CMPD $\M$, i.e., using conditional reset distributions that are realizable according to $\M$'s dynamics. However, the teacher is not assumed to use only the student's action and observation set $\A$ and $\Ss$ to execute the interventions --- it may be able to do things the student  can't, such as setting the the student upright if it is falling from a bike.  

\mypar{In \cisr\, the teacher learns online} Abstractly, we view the teacher as an online learner presented in Algorithm \ref{alg:cisr}.
In particular, for rounds $j = 1, \ldots, N_t$:
\begin{enumerate}[leftmargin=*]
    \item The teacher plays a \emph{decision rule} $\pi^T_j$ that makes a \emph{new} student $j$ learn under an adaptively constructed sequence  
     $C_j=(\M_{i_1},\ldots,\M_{i_{N_s}})$ of intervention-induced CMDPs (lines \ref{alg:line:InnerLoop}-\ref{alg:line:Tobs}). 
    \item Each student $j$ learns via a total of $N_s$ interaction units (e.g., steps, episodes, etc.) with an environment. During each unit, it acts in a CMDP in $C_j$. It updates its policy by transferring knowledge across interaction units (\cref{alg:line:Stransfer}). The teacher computes \emph{features} $\phi(\pi_{n,j})$ of student $j$'s performance (lines \ref{alg:line:Tobs}) by evaluating $j$'s policies throughout $j$'s learning process. Based on them, the teacher's decision rule proposes the next intervention MDP in $C_j$. 
    \item The teacher adjusts its decision rule's parameters (line \ref{alg:Tlearn:line:Ttrain}) that govern how a CMDP sequence $C_{j+1}$ will be produced in the next round.
\end{enumerate}

\begin{assumption}[\textbf{\emph{Length of student learning}}] For all potential students, their CMDP solvers $\texttt{student.train}$ are sound and complete.\footnote{That is, it finds a feasible policy after enough interactions with the CMDP, assuming one exists.} $N_s$ is much larger than the number of interactions it takes the solver to find a feasible policy $\pi \in \Pi_{\M_i}$ for at least one intervention CMDP $\M_i$, $i \in \mathcal{I}$.\end{assumption}

This assumption ensures that students can, in principle, learn a safe policy in the allocated amount of training $N_s$ under some intervention sequence. It also allows the teacher to learn to induce such sequences, given enough rounds $N_t$, even though not every student trained in the process will necessarily have a feasible policy for $\M$ at the end of its learning.

This framework's concrete instantiations depend on the specifics of (i) the decision rule that produces a sequence $C_j$ in each round, (ii) the teacher's evaluation of the student to estimate $\hat{V}(\pi_{N_s,k})$ in each round. 
Next, we consider each of these aspects. \\

\mypar{Curricula and curriculum policies (i)} Before discussing teacher's decision rules that induce intervention sequences in each round, we formalize the notion of these sequences themselves:

\begin{definition}[\textbf{\emph{Curriculum}}] \label{def:curr}
Suppose a student learns via $N_s$ interaction units (e.g., steps or episodes) with an environment, and 
let $\mathcal{I}$ be a set of teacher's interventions. A curriculum $C$ is a sequence $\M_{i_1}, \ldots, \M_{i_{N_s}}$ of length $N_s$ of CMDPs s.t. the student interacts with CMDP $\M_{i_n}$ during unit $n$, where $\M_{i_n}$ is induced by an intervention $i_n \in \mathcal{I}$.
\end{definition}

The difference between a curriculum and a teacher's decision rule that produces it is crucial. While a decision rule for round $j$ \emph{can} be a mapping $C_j:[N_s] \rightarrow \mathcal{I}$ exactly like a curriculum, in general it is useful to make it depend on the student's policy $\pi_{n,j}$ at the start of each interaction unit $n$. In practice, the teacher doesn't have access to $\pi_{n,j}$ directly, but can gather some statistics $\phi(\pi_{n,j})$ about it by conducting an evaluation procedure that we discuss shortly. Examples of useful statistics include the number of times the student's policy triggers teacher's interventions, features of states where this happens, and, importantly, an estimate of the policy value  $\hat{V}(\pi_{n,j})$.

Thus, an adaptive teacher is an agent operating in a \emph{partially observable} MDP $\langle \Ss^T, \A^T, \Pp^T, \R^T, \Oo^T, \Z^T \rangle$, where $\Ss^T = \overline{\Pi}_{\M}$  is the space of \emph{all} student policies for the original CMDP $\M$ (not only feasible ones), $\A^T = \mathcal{I}$ is the set of all teacher interventions, $\Pp^T: \overline{\Pi}_{\M} \times \mathcal{I} \times \overline{\Pi}_{\M} \rightarrow [0,1]$ is governed by the student's learning algorithm, $\Oo^T = \Phi$ is the space of evaluation statistics the teacher gathers, and $\Z^T = \phi$ 
is the mapping from the student's policies to statistics about them, governed by the teacher's evaluation setup. The reward function $\R^T$ can be defined as $\R^T(n) = \hat{V}(\pi_{n,j}) - \hat{V}(\pi_{n-1,j})$, with $\R^T(0) =\hat{V}(\pi_{0,j})$ the ``progress" in student's policy quality from one curriculum stage to the next. Note, however, that what really matters to the teacher is the student's perceived policy quality at the end of round, 
$\hat{V}(\pi_{N_s, j}) = \sum_{n=1}^{N_s} \R^T(n)$. Thus, in general, a teacher's decision rule is a solution to this POMDP:

\begin{definition}[\textbf{\emph{Curriculum policy}}] Let $\mathcal{H}$ be the space of teacher's observation histories. A curriculum policy is a mapping $\pi^T:\mathcal{H} \rightarrow \mathcal{I}$ that, for any $n\in [N_s]$, specifies an intervention given the teacher's observation history $\phi(\pi_1), \ldots, \phi(\pi_{n-1})$ at the start of the student's $n$-th interaction unit.
\end{definition}

In the context of curriculum learning, modeling the teacher as a POMDP agent similar to ours was proposed in \cite{matiisen2017teacherstudent,narvekar2019learning} -- though not for RL safety. However, from the computational standpoint, \cisr's view of a teacher as an online learning agent captures a wider range of possibilities for the teacher's practical implementation. 
For instance, it suggests that it is equally natural to view the teacher as a bandit algorithm that plays suitably parameterized curriculum policies in each round, which is computationally much more tractable than using a full-fledged POMDP solver. As described in Section \ref{sec:impl}, this is the approach we take in this work. \\

\mypar{Safely evaluating students' policies (ii)} Optimizing the curriculum policy requires the evaluation of students' policies to create features and rewards for the teacher. Since a student's intermediate policy may not be safe w.r.t. $\M$, evaluating it in $\M$ could lead to safety violations. 
Instead, we assume that the teacher's intervention set $\mathcal{I}$ includes a special intervention $i_0$ satisfying Proposition \ref{prop:learning_safety}. Hence, this intervention induces a CMDP $\M_0$ where the student can't violate the safety constraint. The teacher uses it for evaluation, although it can also be used for student learning. Therefore, for any policy $\pi$ over the state-action space $\Ss\times\A$, we define $\hat{V}(\pi) \triangleq V_{\M_0}(\pi)$. 
Since $\M_0$ has more constraints than $\M$, evaluation in $\M_0$ underestimates the quality of the teacher's curriculum policy. Let $\hat{\pi}^*$ be the optimal policy for $\M_0$ and $\pi^*$ the optimal policy for $\M$. The value of the teacher's curriculum policy is, then, $\hat{V}(\hat{\pi}^*) \leq V_{\M}(\pi^*)$.  If the student policy violates a constraint in $\M_0$ during execution, the teacher gets a reward $-2TR_{\max}$ where $R_{\max}$ is the largest environment reward.
\backupmedium
\section{Implementation Details \label{sec:impl}}
\backupsmall
\cisr\ allows for many implementation choices. Here, we describe those used in our experiments.

\mypar{Student's training and knowledge transfer} Our students are CMDP solvers based on \cite{le2019batch}, but train online rather than offline as in \cite{le2019batch} since safety is guaranteed  by the teacher. This is a primal-dual solver, where the primal consists of an unconstrained RL problem including the original rewards and a Lagrange multiplier penalty for constraint violation. The dual updates the multipliers to increase the penalty for violated constraints. We use the Stable Baselines \citep{stable-baselines} implementation of PPO \citep{schulman2017proximal} to optimize the Lagrangian of a CMDP for a fixed value of the multipliers, and Exponentiated Gradient \citep{kivinen1997exponentiated}, a no-regret online optimization algorithm, to adapt the multipliers. Our students transfer both value functions and policies across interventions, but reset the state of the optimizer.

\mypar{Teacher's observation} Before every switch to a new intervention $i_{n+1}$, our teacher evaluates the student's policy in CMDP $\M_{i_{n}}$ induced by the previous intervention. The features estimated in this evaluation, which constitute the teacher's observation $o^T_{n}$, are $V_{\M_{i_{n}}}(\pi)$, the student's policy value in CMDP $\M_{i_{n}}$, and the rate of its constraint violation there, $\mathbb{E}_{\rho_{i_n}^\pi} [\sum_{t=0}^T \mathbb{I}(s_t \in \D_{i_n}) - \tau_{i_n}]$.

\mypar{Reward} 
As mentioned in \cref{sec:Tproblem}, from a computational point of view it is convenient to approach the curriculum policy optimization problem as an online optimization problem for a given parametrization of the teacher's curriculum policy. This is the view we adopt in our implementation, where in round $j$, the teacher's objective is the value of the student's final policy $\hat{V}(\pi_{N_s, j})$. Moreover, since after $N_s$ curriculum steps the student's training is over, we compute the student's return directly in $\M$ rather than using a separate evaluation intervention $\M_0$.

\mypar{Policy class}
To learn a good teaching policy efficiently, we restrict the teacher's search space to a computationally tractable class of parameterized policies. We consider reactive policies that depend only on the     teacher's current observation, $o^T_n$, so $\pi^T(o^T_0, o^T_1,\ldots,o^T_n)=\pi^T(o^T_n)$. Moreover, we restrict the number of times the teacher can switch their interventions to at most $K\leq N_s$. A policy from this class is determined by a sequence of $K$ interventions and by a set of rules that determines when to switch to the next intervention in the sequence. Here, we consider simple rules that require the average return and constraint violation during training to be greater/smaller than a threshold. Formally, we denote the threshold array as $\omega \in \mathbb{R}^2$. The teaching policy we consider switches from the current intervention to the next when $\phi(\pi_{n,j})[0] \geq \omega[0] \land \phi(\pi_{n,j})[1] \leq \omega[1]$. Thus, teacher's policies are fully determined by $3K+1$ parameters. 

\mypar{Teacher's training with GP-UCB} Given the teacher's policy is low dimensional and sample efficiency is crucial, we use Bayesian optimization (BO) \citep{mockus1978application} to optimize it. That is, we view the $3K+1$ dimensional parameter vector of the teacher as input to the BO algorithm, and search for parameters that approximately optimize the teacher's reward. Concretely, we use GP-UCB \citep{srinivas2009gaussian}, a simple Bayesian optimization algorithm that enjoys strong theoretical properties.
\backupmedium
\section{Experiments} \label{sec:experiments}
\backupsmall
We present experiments where \cisr\ efficiently and safely trains deep RL agents in two environments: the \textit{Frozen Lake} and the \textit{Lunar Lander} environments from Open AI Gym \citep{brockman2016openai}. While \textit{Frozen Lake} has simple dynamics, it demonstrates how safety exacerbates the difficult problem of exploration in goal-oriented environments. \textit{Lunar Lander} has more complex dynamics and a continuous state space. We compare students trained with a curriculum optimized by \cisr\  to students trained with trivial or no curricula in terms of safety and sample efficiency. \emph{In addition, we show that curriculum policies can transfer well to students of different architectures and sensing capabilities (\cref{tab:lander}).} \textbf{For a detailed overview of the hyperparameters and the environments, see \cref{sec:app:hyperpar,sec:app:environments}.}

\mypar{\textit{Frozen Lake}} In this grid-world environment (\cref{fig:flake_small_map}), the student must reach a goal in a 2D map while avoiding dangers. It can move in 4 directions. With probability $80\%$ it moves in the desired direction and with $10\%$ probability it moves in either of the orthogonal ones. The student only sees the map in \cref{fig:flake_small_map} and is not aware of the teacher interventions' trigger states (\cref{fig:flake_small_map_with_teacher1,fig:flake_small_map_with_teacher2}). Note that the high density of obstacles, the strong contrast between performance and safety, the safe training requirement and the high-dimensional observations (full map, as opposed to student location) make this environment substantially harder than the standard \textit{Frozen Lake}.

We use three interventions, whose trigger states are shown in \cref{fig:flake_small_map_with_teacher1,fig:flake_small_map_with_teacher2}: \textit{soft reset 1 (SR1)}, \textit{soft reset 2 (SR2)}, and \textit{hard reset (HR)}. \textit{SR1} and \textit{SR2} have tolerance $\tau=0.1$ and reset the student to the state where it was the time step before being rescued by the teacher. \textit{HR} has zero tolerance, $\tau=0$ and resets the student to the initial state.

We compare six different teaching policies: (\textit{i}) \textit{No-intervention}, where students learn in the original environment; (\textit{ii}-\textit{iii}-\textit{iv}) \textit{single-intervention}, where students learn under each of the interventions fixed for the entire learning duration; (\textit{v}) \textit{Bandit}, where students follow curricula induced by the \textit{a priori} fixed curriculum policy from \cite{matiisen2017teacherstudent};
(\textit{vi}) \textit{Optimized}, where we use the curriculum policy found by \cisr\ within the considered policy class after interacting with 30 students. We let each of these curriculum policies train 10 students. For the purposes of analysis, we periodically freeze the students' policies and evaluate them in the original environment. 

\mypar{Results} \cref{fig:flake_curves} shows the success rate and the return of the students' policies deployed in the original environment as training progresses. Without the teacher's supervision (\textit{No-interv.}), the students learn sensible policies. However, the training is slow and results in thousands of failures (\cref{tab:flake} in \cref{sec:app:environments}). The \textit{HR} intervention resets the students to the initial state distribution whenever the teacher constraint is violated. However, since it has more trigger states than \textit{No-interv.} (\cref{fig:flake_small_map_with_teacher1} vs \cref{fig:flake_small_map}), the students training exclusively under it do not explore enough to reach the goal. \textit{SR1} and \textit{SR2} allow the student to learn about the goal without incurring failures thanks to their reset distribution, which is more forgiving that \textit{HR}'s one. However,  they result in performance plateaus as the soft reset of the training environment lets the students recover from mistakes in a way the deployment environment doesn't. The \textit{Optimized} curriculum retains the best of both worlds by initially proposing a soft reset intervention that allows the agent to reach the goal and subsequently switching to the hard reset such that the training environment is more similar to the original one. Finally, the \textit{Bandit} curriculum policy from \cite{matiisen2017teacherstudent} requires an initial exploration of different interventions with each student as it does not learn across students. This is in contrast with \cisr, which exploits information acquired from previous students to improve its curriculum policy, and results in slower training. \cref{tab:flake} in \cref{sec:app:environments} shows the confidence intervals of mean performance across 10 students and teachers trained on 3 seeds, indicating \cisr's robustness.


\begin{figure}
\begin{minipage}[t]{0.56\textwidth}
     \centering
     \begin{minipage}[t]{0.26\linewidth}
         \centering
         \includegraphics[width=\textwidth]{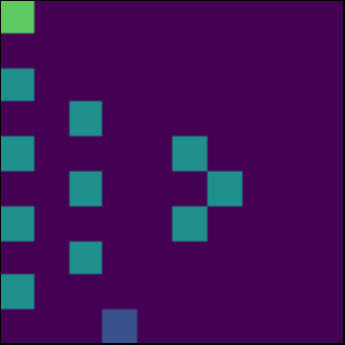}
         \subcaption{Original map}
         \label{fig:flake_small_map}
     \end{minipage}
     \begin{minipage}[t]{0.26\linewidth}
         \centering
         \includegraphics[width=\textwidth]{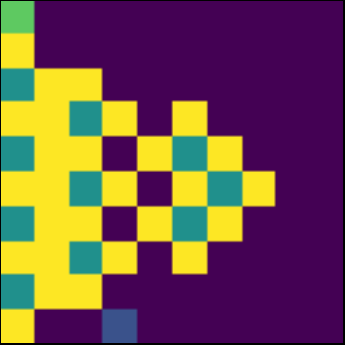}
         \subcaption{\small \textit{SR1} and \textit{HR}}
         \label{fig:flake_small_map_with_teacher1}
     \end{minipage}
     \hspace{-0.9cm}
     \begin{minipage}[t]{0.52\linewidth}
         \centering
         \includegraphics[width=1.02\textwidth]{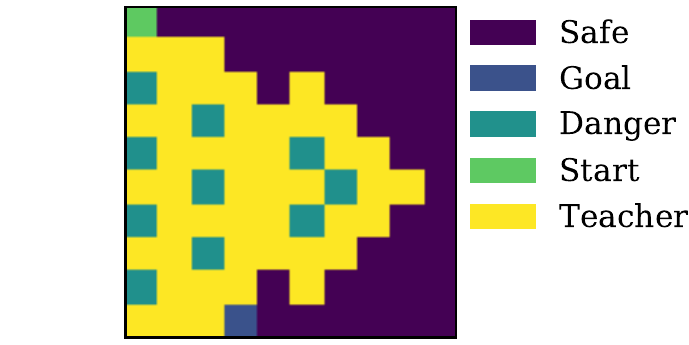}
         \subcaption{\small \textit{SR2}}
         \label{fig:flake_small_map_with_teacher2}
     \end{minipage}
     \vspace{-0.2cm}
     \caption{\small Interventions for \textit{Frozen Lake}. Maps \ref{fig:flake_small_map_with_teacher1} and \ref{fig:flake_small_map_with_teacher2} show trigger state sets $\mathcal{D}_{\mathit{SR1}}$ and $\mathcal{D}_{\mathit{SR2}}$ for interventions $\mathit{SR1}$ and $\mathit{SR2}$, which get triggered at distance = $1$ and $2$ from lakes (dangers), respectively. Intervention $\mathit{HR}$ has $\mathcal{D}_{\mathit{HR}} = \mathcal{D}_{\mathit{SR1}}$.}
\end{minipage}
\hspace{0.3cm}
\begin{minipage}[t]{0.36\textwidth}
     \centering
     \begin{minipage}[t]{0.47\textwidth}
         \centering
         \includegraphics[width=\textwidth]{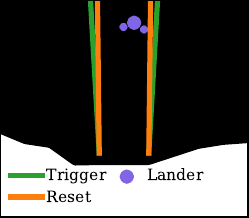}
         \subcaption{``Narrow"}
     \end{minipage}
     \begin{minipage}[t]{0.47\textwidth}
         \centering
         \includegraphics[width=\textwidth]{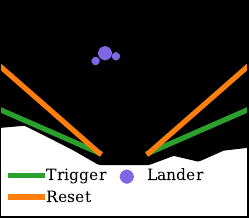}
         \subcaption{``Wide"}
    \end{minipage}
    \vspace{-0.2cm}
    \caption{\small Interventions for \textit{Lunar Lander}. If the student hits the green line, it gets reset to a state on the orange line. See \cref{sec:app:environments} for more details.}
    \label{fig:funnel}
\end{minipage}
\vspace{-0.5cm}
\end{figure}

\begin{figure}[t]
     \centering
     \begin{subfigure}[t]{0.415\textwidth}
         \centering
         \includegraphics[width=\textwidth]{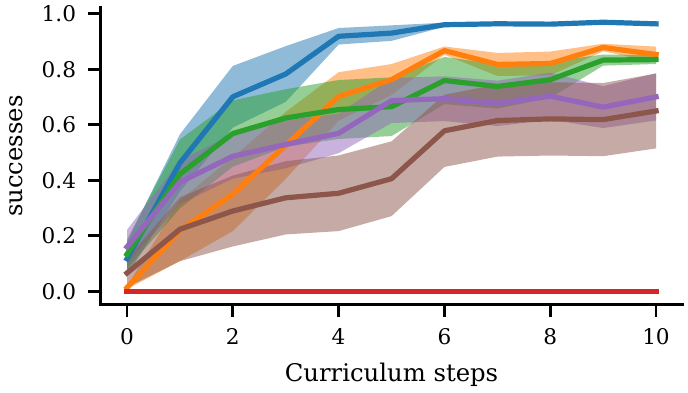}
     \end{subfigure}
     \hfill
     \begin{subfigure}[t]{0.415\textwidth}
         \centering
         \includegraphics[width=\textwidth]{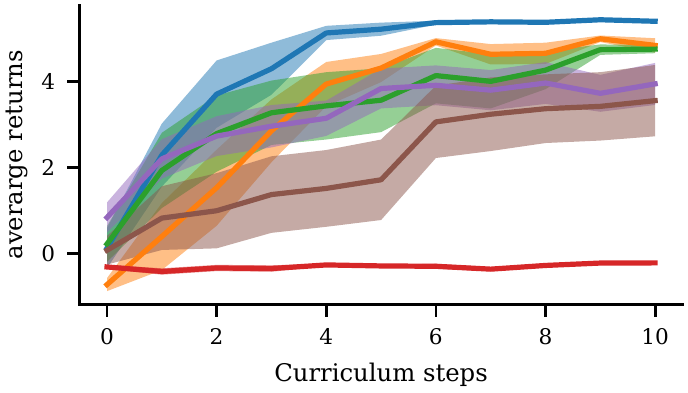}
    \end{subfigure}
    \hfill
    \begin{subfigure}[t]{0.15\textwidth}
         \centering
         \includegraphics[width=\textwidth]{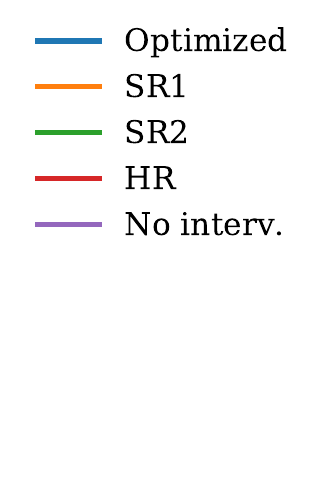}
    \end{subfigure}
    \caption{\small Student success rate (\textbf{Left}) and average returns (\textbf{Right}) in \textit{Frozen Lake} as student training under different curriculum policies progresses. The \textit{Optimized} curriculum policy outperforms training in the original environment (\textit{No-interv.}), in each of the individual interventions (\textit{SR1}, \textit{SR2}, \textit{HR}) and the pre-specified curriculum policy of \cite{matiisen2017teacherstudent} (\textit{Bandit}).}
    \label{fig:flake_curves}.
    \vspace{-0.6cm}
\end{figure}

\mypar{\textit{Lunar Lander}} In this environment, the goal is to safely land a spaceship on the Moon. Crucially, the Moon surface is rugged and differs across episodes. The only constant is a flat landing pad that stretches across $\mathcal{X}_{land}$ along the $x$ dimension (at $y=0$). Landing safely is particularly challenging since agents do not observe their distance from the ground, only their absolute $x$ and $y$ coordinates.

\looseness -1 Since the only way to ensure safe training is to land on the landing pad each time, we use the two interventions in \cref{fig:funnel}. Each gets triggered based on the student's tilt angle and $y$-velocity if the student is over the landing pad ($x \in \mathcal{X}_{land}$), and on a funnel-shaped function of its $x,y$ coordinates otherwise ($x \not \in \mathcal{X}_{land}$). The former case prevents crashing onto the landing pad, the latter, landing (and possibly crashing) on the rugged surface the student cannot sense. We call the interventions \textit{Narrow} and \textit{Wide} (see \cref{fig:funnel}); both set the student's velocity to 0 after rescuing it to a safe state. The interventions, despite ensuring safety, make exploration harder as they make experiencing a natural ending of an episode difficult.

We compare four curriculum policies: (\textit{i}) \textit{No-intervention}, (\textit{ii}-\textit{iii}) \textit{single-intervention} and (\textit{iv}) \textit{Optimized}. We let each policy train 10 students and we compare their final performance in the original \textit{Lunar Lander}. Moreover, we use the \textit{Optimized} curriculum to train different students than those it was optimized for, thus showing the transferability of curricula.

\mypar{Results} \cref{tab:lander} (left) shows for each curriculum policy the mean of the students' final return, success rate and failure rate in the original environment and the average number of failures during training. The \textit{Narrow} intervention makes exploration less challenging but prevents the students from experiencing big portions of the state space. Thus, it results in fast training that plateaus early. The \textit{Wide} intervention makes exploration harder but it is more similar to the original environment. Thus, it results in slow learning. \textit{Optimized} retains the best of both worlds by initially using the \textit{Narrow} intervention to speed up learning and subsequently switching to the \textit{Wide} one. In \textit{No-interv.}, exploration is easier since the teacher's safety considerations do not preclude a natural ending of the episode. Therefore, \textit{No-interv.} attains a comparable performance to \textit{Optimized}. However, the absence of the teacher results in three orders of magnitude more training failures. 

\cref{tab:lander} shows the results of using the teaching policy optimized for students with 2-layer MLP policies and noiseless observations to train students with noisy sensors (center) and different architectures (right). These results are similar to those previously observed: the \textit{Optimized} curriculum attains a comparable performance to the \textit{No interv.} training while greatly improving safety. This shows that teaching policies can be effectively transferred, which is of great interest in many applications. \cref{tab:landerCI} in \cref{sec:app:environments} shows the confidence intervals for these experiments over 3 seeds.


\begingroup
\setlength{\tabcolsep}{1pt} 
\begin{table}[!]
    \footnotesize
    \centering
    \begin{tabular}{l|c c c c | c c c c | c c c c}
         & \multicolumn{4}{c}{\textbf{Eval. on noiseless, 2-layer student}} & \multicolumn{4}{c}{\textbf{Eval. on noisy student}} & \multicolumn{4}{c}{\textbf{Eval. on 1-layer student}} \\
         & $V_{\M}(\pi_{N_s})$ & Succ. & \vtop{\hbox{\strut Test}\hbox{\strut fail}} & \vtop{\hbox{\strut Train}\hbox{\strut fail}} & $V_{\M}(\pi_{N_s})$ & Succ. & \vtop{\hbox{\strut Test}\hbox{\strut fail}} & \vtop{\hbox{\strut Train}\hbox{\strut fail}} & $V_{\M}(\pi_{N_s})$ & Succ. & \vtop{\hbox{\strut Test}\hbox{\strut fail}} & \vtop{\hbox{\strut Train}\hbox{\strut fail}} \\
         \hline
         \textit{Optimized} & 233.8 & 88.4\% & 10.5\% & 1.6 & 221.1 & 86.7\% & 10.5\% &2.95 & 254.5 & 92.2\% & 6.6\% &2.2\\ 
         \textit{Narrow} & 183.0 & 72.4\% & 30.0\% & 1.0 & 149.1 & 65.1\% & 32.4\% & 1.2 & 220.4 & 83.3\% & 15.9\% & 1.3\\
         \textit{Wide} & 210.6 & 81.4\% & 17.6\% & 3.4 & 153.9 & 75.1\% & 14.0\% & 4.05 & 119.9 & 67.4\% & 17.6\% & 4.0\\
         \textit{No-interv.} & 236.3 & 90.1\% & 7.9\% & 1228.8 & 210.3 & 85.5\% & 11.8\% & 1651.9 & 248.7 & 92.4\% & 6.1\% & 1368.1
     \end{tabular}
     \vspace{0.1cm}
    \caption{\small \textit{Lunar Lander} final performance summary. \textbf{Noiseless, 2-layer student (Left):} The \textit{Wide} and \textit{Narrow} interventions result in low performance due to the difficulty of exploration and early plateau, respectively.
    Students that learn under the \textit{Optimized} curriculum policy achieve a comparable performance to those training under \textit{No-intervention}, but suffer three orders of magnitude fewer training failures. \textbf{Noisy student (Center), One layered-student (Right):} The results are similar when we use the curriculum optimized for students with noiseless observations and a 2-layer MLP policy for students with noisy sensors or a 1-layer architecture.
}
    \label{tab:lander}
    \vspace{-0.8cm}
\end{table}
\endgroup
\backupmedium
\section{Concluding remarks}
\backupsmall
In this work, we introduce \cisr, a novel framework for safe RL that avoids many of the impractical assumptions common in the safe RL literature. In particular, we introduce curricula inspired by human learning for safe training and deployment of RL agents and a principled way to optimize them. Finally, we show how training under such optimized curricula results in performance comparable or superior to training without them, while greatly improving safety.

While it yields promising results, the teaching policy class considered here is quite simplistic. The use of more complex teaching policy classes, which is likely to require a similarity measure between interventions and techniques to reduce the teacher's sample complexity, is a relevant topic for future research.
Finally, it would be relevant to study how to combine the safety-centered interventions we propose with other task-generating techniques that are used in the field of curriculum learning for RL.

\clearpage

\begin{ack}
    We would like to thank Cathy Wu (MIT), Sanmit Narvekar (University of Texas at Austin), Luca Corinzia (ETH Zurich) and Patrick MacAlpine (Microsoft Research) for their comments and suggestions regarding this work. Alekh Agarwal and Andrey Kolobov also thank Geoff Gordon, Romain Laroche and Siddhartha Sen for formative discussions. This work was supported by the Max Planck ETH Center for Learning Systems. This project has received funding from the European Research Council (ERC) under the European Unions Horizon 2020 research and innovation program grant agreement No 815943. 
\end{ack}
\section*{Broader Impact Statement}
\looseness -1 Our paper introduces conceptual formulations and algorithms for the safe training of RL agents. RL has the potential to bring significant benefits for society, beyond the existing use cases. However, in many domains it is of paramount importance to develop RL approaches that avoid harmful side-effects during learning and deployment. We believe that our work contributes to this quest, potentially bringing RL closer to high-stakes real-world applications.
Of course, any technology -- especially one as general as RL -- has the potential of misuse, and we refrain from speculating further.


\bibliographystyle{apalike}
\bibliography{neurips20}

\clearpage

\appendix


\begin{center}
\textbf{\Large{APPENDIX}}
\end{center}

\section{Hyperparameters} \label{sec:app:hyperpar}
In this section, we report the hyperparameters that we use for the students, which are CMDP solvers based on an online version of \cite{le2019batch}, and for the teachers, which are based on the GP-UCB algorithm for multi-armed bandits \cite{srinivas2009gaussian}.

\subsection{Students} The students comprise two components: an unconstrained RL solver and a no-regret online optimizer. The first component is used to solve the unconstrained RL problem that results from optimizing the Lagrangian of a given CMDP for a fixed value of the Lagrange multipliers. For this, we use the Stable Baselines \cite{stable-baselines} implementation of the Proximal Policy Optimization (PPO) algorithm \cite{schulman2017proximal}. The second component is used to adapt the Lagrangian multipliers online. As suggested in \cite{le2019batch}, we use the Exponentiated Gradient algorithm \cite{kivinen1997exponentiated} for this. In the following, we use the hyperaparameters naming convention from Stable Baselines \cite{stable-baselines} for PPO and from \cite{le2019batch} for Exponentiated Gradient.

\mypar{Frozen Lake} In \cref{tab:flake_hyperpar_student}, we show the hyperparameters used for the students in the \textit{Frozen Lake} experiments, except for those that determine their policy class. In these experiments, the student's policies are parametrized as convolutional neural networks with 2 convolutional layers followed by a fully connected layer. The first convolutional layer has 32 filters of size 3 and stride 1. The second one has 64 filters of size 3 and stride 1. The fully connected layers contains 32 neurons. We use ReLU as activation function.

\mypar{Lunar Lander} In \cref{tab:lander_hyperpar_student}, we show the hyperparameters used for the students in the \textit{Lunar Lander} experiments, except for those that determine their policy class. In these experiments, the student's policies are parametrized as MLP networks with 2 hidden layers with 20 neurons each. We use ReLU as activation function.

\subsection{Teachers}
Our teachers are based on the the GPyOpt \cite{gpyopt2016} implementation of GP-UCB.

The teacher's hyperparameters are those of the Gaussian process (GP) model used by GP-UCB. In all the experiments, we use a GP with radial basis function (RBF) kernel with automatic relevance determination (ARD) and a Gaussian likelihood. Therefore, the teacher has the following hyperparameters: the signal variance, $\sigma^2_f$, an array of $3K+1$ lengthscales $l \in \mathbb{R}^{3K+1}$, where $3K+1$ is the number of parameters that determines the teacher's policy for a fixed number of intervention switches, $K$, see \cref{sec:impl}, and the noise variance $\sigma^2_n$. Rather than fixing the hyperparameters a priori, we define  hyperpriors over them and use their maximum a posteriori (MAP) estimate, which we update after every newly acquired data point.

The data is normalized before being fed to the GP.

\mypar{Frozen Lake}
In the \textit{Frozen Lake} experiments, we allow for up to two switches between interventions; that is, $K=2$. Therefore, $l\in \mathbb{R}^7$. In \cref{tab:flake_hyperpar_teacher}, we show the mean and the variance of the Gamma hyperprior of each hyperparameter.

\mypar{Lunar Lander}
In the \textit{Lunar Lander} experiments, we allow for up to one switch between interventions; that is, $K=1$. Therefore, $l\in \mathbb{R}^4$. In \cref{tab:lander_hyperpar_teacher}, we show the mean and the variance of the Gamma hyperprior of each hyperparameter.


\begin{table}[h]
    \begin{subtable}{0.45\textwidth}
    \centering
    \begin{tabular}[t]{l|c}
        Name & Value \\
        \hline
         $n\_steps$ & 128 \\
         $ent\_coef$ & 0.05\\ 
         $learning\_rate$ & 0.001 \\
         $noptepochs$ & 9\\
    \end{tabular}
    \caption{PPO}
    \end{subtable}
    \quad
    \begin{subtable}{0.45\textwidth}
    \centering
    \begin{tabular}[t]{l|c}
        Name & Value \\
        \hline
         $B$ & 0.5 \\ 
         $\eta$ & 1.0
    \end{tabular}
    \caption{Exponentiated Gradient}
    \end{subtable}
    \caption{Student's hyperparameters for the \textit{Frozen lake} environment.}
    \label{tab:flake_hyperpar_student}
\end{table}

\begin{table}[h]
    \begin{subtable}{0.45\textwidth}
    \centering
    \begin{tabular}[t]{l|c}
        Name & Value \\
        \hline
         $n\_steps$ & 500 \\
         $ent\_coef$ & 0.001\\ 
         $learning\_rate$ & 0.005 \\
         $noptepochs$ & 32\\
    \end{tabular}
    \caption{PPO}
    \end{subtable}
    \quad
    \begin{subtable}{0.45\textwidth}
    \centering
    \begin{tabular}[t]{l|c}
        Name & Value \\
        \hline
         $B$ & 120 \\ 
         $\eta$ & 1.0
    \end{tabular}
    \caption{Exponentiated Gradient}
    \end{subtable}
    \caption{Student's hyperparameters for the \textit{Lunar Lander} environment.}
    \label{tab:lander_hyperpar_student}
\end{table}

\begin{table}[h]
    \centering
    \begin{tabular}{c|c c c c c c c c c}
        Hyperparameter &  $\sigma^2_f$ & $l_1$ & $l_2$ & $l_3$ & $l_4$ & $l_6$ & $l_5$ & $l_7$ & $\sigma^2_n$ \\
        \hline
        $\mu$ & 1 &  1 & 0.05 & 1 & 0.05 & 0.2 & 0.2 & 0.2 & 0.01 \\
        $\sigma^2$ & 0.2 & 1 & 0.02 & 1 & 0.02 & 0.2 & 0.2 & 0.2 & 0.1
    \end{tabular}
    \caption{Mean and variance of the Gamma hyperpriors for the teacher's hyperparameters for the \textit{Frozen Lake} environment.}
    \label{tab:flake_hyperpar_teacher}
\end{table}

\begin{table}[ht]
    \centering
    \begin{tabular}{c|c c c c c c}
        Hyperparameter &  $\sigma^2_f$ & $l_1$ & $l_2$ & $l_3$ & $l_4$ & $\sigma^2_n$ \\
        \hline
        $\mu$ & 1 &  20 & 1 & 0.2 & 0.2 & 0.01 \\
        $\sigma^2$ & 0.2 & 4 & 0.3 & 0.2 & 0.2 & 0.1
    \end{tabular}
    \caption{Mean and variance of the Gamma hyperpriors for the teacher's hyperparameters for the \textit{Lunar Lander} environment.}
    \label{tab:lander_hyperpar_teacher}
\end{table}
\section{Experiments}
\label{sec:app:environments}
In this section, we provide a detailed explanation of our experimental setup and we present the results we obtained repeating the curriculum optimization and evaluation for multiple random seeds.

\subsection{Frozen Lake}
\mypar{Environment}
In the \textit{Frozen Lake} experiments, we use the $10\times 10$ map in \cref{fig:flake_small_map}. The student receives the full map as observation but is not aware of the areas of influence of the teacher, \cref{fig:flake_small_map_with_teacher1,fig:flake_small_map_with_teacher2}. In each location, it can take one of four actions: \textit{up}, \textit{right}, \textit{left} or \textit{down}. With probability $80\%$ it moves in the desired direction and with $10\%$ probability it moves in either of the orthogonal ones. After each move, it can end up in one of three kind of tiles: \textit{goal}, which results in a successful termination of the episode, \textit{danger}, which results in a  failure and the consequent termination of the episode and \textit{safe}. The agent receives a reward of $6$ for reaching the goal and $-0.01$ otherwise (entering dangerous tiles is discouraged via the constraint rather than with low rewards).

An interaction unit between the student and the teacher consists of 10000 time steps. A curriculum lasts for 11 of such interaction units. 

\mypar{Teacher's training} We consider curriculum policies that allow for up to two intervention switches, i.e., $K=2$. To initialize the GP model, we sample 10 curriculum policies at random, train a student with each of those and feed their final performance to the GP model. To optimize the curriculum, we run GP-UCB for 20 iterations, where each iteration corresponds to training a single student with the curriculum policy proposed by GP-UCB.

\mypar{Teacher's evaluation} To evaluate the quality of a curriculum policy, we get 10 new students, we let them train with the curricula induced by such policy and we record their failures during training as well as their returns and their successes when they are deployed in the original environment (i.e. without supervision) for 10000 time steps. \cref{fig:flake_curves} reports the mean of these quantities over the 10 students for all the curriculum policies that we consider at the end of each interaction unit. Notice that evaluating after each intervention unit is done solely for analysis purposes as, in practice, one should not deploy a student in the original environment before the curriculum is completed.

For each of the teaching policies considered, we report the mean returns and success rates at the end of the curriculum over the 10 students as well as their mean number of failures during training in \cref{tab:flake}. Here, the confidence intervals are obtained by optimizing 3 curriculum policies independently with different random seeds and repeating the evaluation procedure for each one.

\begin{table}[t]
    \centering
    \begin{tabular}{l|c c c}
         & Success & Training failures & $V_{\M}(\pi_{N_s})$ \\
         \hline
         \textit{Optimized} & $\mathbf{0.960\pm0.004}$ & $0\pm0$             & $\mathbf{5.368\pm0.025}$  \\
         \textit{SR2}       & $0.827\pm0.027$ & $0\pm0$             & $4.669\pm0.168$  \\
         \textit{SR1}       & $0.850\pm0.011$ & $0\pm0$             & $4.839\pm0.065$  \\
         \textit{HR}        & $0.000\pm0.000$ & $0\pm0$             & $-0.222\pm0.013$ \\
         \textit{Bandit}  & $0.574\pm0.049$ & $0\pm0$    & $3.077\pm0.288$  \\
         \textit{No-interv.}  & $0.768\pm0.028$ & $3075.6\pm492.1$    & $4.329\pm0.160$  \\
     \end{tabular}
    \caption{Final deployment performance in \textit{Frozen Lake} with confidence intervals obtained by training and evaluating the teachers with three different random seeds. The students trained with the optimized curriculum outperform both naive curricula and training in the original environment in terms of success rate and return. All the agents supervised by a teacher are safe during training. In contrast, training directly in the original environment results in many failures. These results are consistent across random seeds, thus showing the robustness of \cisr.}
    \label{tab:flake}
\end{table}

\subsection{Lunar Lander}

\begin{table}[t]
    \begin{subtable}[h]{\textwidth}
    \centering
        \begin{tabular}{l|c c c c c}
        & Succ. & Crash & OOM & $V_{\M}(\pi_{N_s})$ & \vtop{\hbox{\strut Training}\hbox{\strut failures}}\\
        \hline
        \textit{Optimized} & $89.2\pm0.4\%$ & $9.5\pm0.6\%$ & $0.4\pm0.1\%$ & $236.1\pm0.9$ & $1.7\pm0.13$     \\
         \textit{Wide}     & $79.2\pm2.2\%$ & $18.7\pm1.5\%$ & $0.4\pm0.1\%$ & $199.5\pm8.3$ & $3.5\pm0.15$     \\
         \textit{Narrow}   & $72.7\pm3.1\%$ & $23.4\pm2.1\%$ & $0.4\pm0.2\%$ & $187.2\pm9.0$ & $0.9\pm0.02$     \\
        \textit{No-interv.}& $88.5\pm1.6\%$ & $7.3\pm0.1\%$ & $1.9\pm1.4\%$ & $225.1\pm7.4$ & $1251.0\pm33.72$ \\
        \end{tabular}
    \caption{Evaluation on noiseless, 2-layer students.}
    \label{tab:TwoLayeredNoiselessStudentCI}
    \end{subtable}
    \\
    \begin{subtable}[h]{\textwidth}
        \centering
        \begin{tabular}{l|c c c c c}
        & Success & Crashes & OOM & $V_{\M}(\pi_{N_s})$ & \vtop{\hbox{\strut Training}\hbox{\strut failures}}\\
        \hline
        \textit{Optimized} & $83.4\pm2.3\%$ & $13.2\pm2.2\%$ & $0.3\pm0.1\%$ & $211.5\pm6.8$  & $2.6\pm0.23$     \\
        \textit{Wide} & $78.8\pm2.6\%$ & $13.8\pm0.8\%$ & $0.7\pm0.4\%$ & $184.7\pm21.8$ & $4.2\pm0.12$     \\
        \textit{Narrow} & $63.2\pm1.3\%$ & $32.9\pm1.0\%$ & $0.6\pm0.2\%$ & $139.1\pm7.1$  & $1.8\pm0.46$     \\
        \textit{No-interv.} & $86.0\pm0.4\%$ & $10.8\pm0.1\%$ & $0.8\pm0.2\%$ & $214.7\pm3.1$  & $1695.8\pm31.04$ \\
        \end{tabular}
        \caption{Evaluation on 2-layer students with noisy sensors.}
        \label{tab:TwoLayeredNoisyStudentCI}
    \end{subtable}
    \\
    \begin{subtable}[h]{\textwidth}
        \centering
        \begin{tabular}{l|ccccc}
        & Success & Crashes & OOM & $V_{\M}(\pi_{N_s})$ & \vtop{\hbox{\strut Training}\hbox{\strut failures}}\\
        \hline 
        \textit{Optimized} & $92.1\pm1.6\%$ & $5.1\pm0.6\%$ & $0.0\pm0.0\%$ & $253.4\pm5.0$  & $1.9\pm0.13$     \\
        \textit{Wide}      & $72.2\pm2.9\%$ & $16.6\pm0.9\%$ & $2.4\pm1.1\%$ & $151.6\pm18.5$ & $3.5\pm0.25$     \\
        \textit{Narrow}    & $81.7\pm1.0\%$ & $16.3\pm0.2\%$ & $0.0\pm0.0\%$ & $221.0\pm2.1$  & $1.3\pm0.03$     \\
        \textit{No-interv.}& $94.5\pm1.0\%$ & $4.5\pm0.9\%$ & $0.1\pm0.0\%$ & $256.4\pm3.7$  & $1175.0\pm80.56$ \\
        \end{tabular}
        \caption{Evaluation on noiseless, 1-layer students.}
        \label{tab:OneLayeredStudentCI}
    \end{subtable}
    \caption{\small \textit{Lunar Lander} final deployment performance summary for three different kinds of students with confidence intervals obtained by training and evaluating the teachers with three different random seeds. \textbf{Noiseless, 2-layer student (Top):} The \textit{Narrow} intervention helps exploration but results in policy performance plateau, the \textit{Wide} one slows down student learning due to making exploration more challenging, and the \textit{Optimized} teacher provides the best of both by switching between \textit{Narrow} and \textit{Wide}. Students that learn under the \textit{Optimized} curriculum policy achieve a comparable performance to those training under \textit{No-intervention}, but suffer three orders of magnitude fewer training failures. \textbf{Noisy student (Center), One layered-student (Bottom):} The results are similar when we use the curriculum optimized for students with noiseless observations and a 2-layer MLP policy for students with noisy sensors (center) or a 1-layer architecture (bottom), thus showing teaching policies can be transferred across classes of students. These results are consistent across random seeds, thus showing the robustness of \cisr.}
    \label{tab:landerCI}
\end{table}
The observation space of the \textit{Lunar Lander} environment is 8-dimensional and it includes: $x$ and $y$ position, tilt angle, linear and angular velocities and two Booleans that indicate whether each leg is in contact with the ground. At each time step, the lander can take one of four actions: fire the main, the left or the right engine or do nothing. The agent receives a reward of $100$ for a successful landing, of -0.3 for firing the main engine and of -0.03 for firing the side engines. Additionally, there is a potential based reward shaping that encourages the contact of the legs with the ground and moving toward the origin, i.e., the center of the landing pad. At the beginning of each episode, a random force is applied to the agent and the surface of the Moon is generated at random, with the only constant being the flat surface of the landing pad in the center of the map. Since the agent does not observe its distance from the ground and since the surface of the Moon is generated at every episode, the only way to guarantee safety is to land on the landing pad. In the original environment, each episode can terminate with either a successful landing or a failure (either a crash or exiting the game window from the sides, which we call an out of map, OOM, outcome). However, since the teacher's interventions make it hard for inexperienced students to come across a natural ending of the episode, we introduce a timeout, which we set to 500 during training and to 2000 during deployment (a well trained agent usually requires between 150 and 250 steps to land). Every time an episode ends because of a timeout, the student receives a reward of -100.

The trigger function of the interventions depends on whether the student's is above the landing pad, i.e., $x\in \mathcal{X}_{\textrm{land}}$, or not. In particular, let us denote with $x$ and $y$ the position of the agent, with  $\dot{x}$ and $\dot{y}$ its linear velocities, with $\alpha$ its tilt angle and with $\dot{\alpha}$ its angular velocity. The landing pad stretches between $-0.2$ and $0.2$, while the whole map extends from $-1$ to $1$. For a fixed steepness of the funnel $a$, the trigger function of the interventions are of the form:
\begin{equation}
  trigger(x,y,\dot{x},\dot{y},\alpha,\dot{\alpha}) =
    \begin{cases}
      \mathbb{I}(\dot{y}\geq 0.3 + 10 y) \lor \mathbb{I}(\alpha\geq 0.5 + 10 y) & \text{if $x\in [-0.2, 0.2]$}\\
      y \leq a(-0.2 - x) & \text{if $x< -0.2$}\\
      y \leq a(x - 0.2) & \text{if $x> -0.2$}
    \end{cases}       
\end{equation}

The reset distribution that determines the student's state after the teacher intervenes also depends on whether the teacher rescues the student above the landing pad or not. We denote with $(x,y,\dot{x},\dot{y},\alpha,\dot{\alpha})$ the state where the students gets rescued and with $(x',y',\dot{x}',\dot{y}',\alpha',\dot{\alpha}')$ the state where the student gets reset. First of all, the teacher always stabilizes the student and, therefore, we have $\dot{x}'=\dot{y}'=\alpha'=\dot{\alpha}'=0$. Thus, the reset distributions only differ based on the location where the teacher steers the student to make it stay clear from danger. In particular, if $x \in [-0.2, 0.2]$, we have $x'=x$ and $y'=y-0.1$. However, if $x > 0.2$, the reset location of the student is determined by a geometric construction: we reset the student at the intersection between the line of that forms and angle of $135^\circ$ with the horizontal axis passing through $x$ and $y$ and the line $a'(x - 0.2)$, for a given $a'>a$ (the orange line in \cref{fig:funnel}). A symmetric construction is used in case $x<-0.2$.

The \textit{Narrow} intervention corresponds to $a=20$ and $a'=100$, while the \textit{Wide} intervention corresponds to $a=0.5$ and $a'=1$.

An interaction unit between the student and the teacher consists of 100000 time steps. A curriculum lasts for 15 of such interaction units. 

\mypar{Teacher's training} We consider curriculum policies that allow for up to one intervention switch, i.e., $K=1$. Since the student's learning dynamics are quite noisy in this environment, we evaluate each curriculum policy for a class of 10 students in parallel and use the mean final performance of the students as a signal for GP-UCB. To initialize the GP model, we use 4 curriculum policies, one for each possible combination of interventions allowed by the policy class considered. To optimize the curriculum, we run GP-UCB for 10 iterations, where each iteration corresponds to training a class of 10 students in parallel with the curriculum policy proposed by GP-UCB.

\mypar{Teacher's evaluation} The evaluation of teaching policies is analogous to the \textit{Frozen Lake} case: we let each curriculum policy train 10 newly sampled students and we deploy them in the original environment for 200000 time steps to measure their performance. Since a much longer deployment time compared to \textit{Frozen Lake} is required to obtain accurate estimates of the student's performance, we only record it at the end of the curriculum rather than after each interaction unit. 

In these experiments, we also investigate the transferability of teaching policies, which is of great importance for many practically relevant scenarios. To this end, we apply the teaching policy optimized for students with perfect state information to students with noisy sensors. In particular, we consider students that observe $\tilde{x}=x+w_x$ and $\tilde{y}=y+w_y$, where $w_x\sim \mathcal{N}(0, 10^{-4})$ and $w_y\sim \mathcal{N}(0, 10^{-4})$. This level of noise is quite challenging as one standard deviation covers $2.5\%$ of the width of the landing pad. In these experiments, the teacher uses the noiseless state information to rescue the student. This captures a scenario that is common in real-world applications where we have hardware that helps preserving safety during training, such as motion capture systems, that we cannot use during deployment. Since training in these conditions is harder and more prone to constraint violation, we let the training run for 20 interaction units rather than 15 and we allow for higher penalty for constrain violation through the Lagrange multipliers by considering a higher upper bound on them (we set $B=160$ rather than $B=120$).

In a separate experiment, we apply the teaching policies optimized for the student's architecture presented in \cref{sec:app:hyperpar} to students that only have one hidden layer with 20 neurons rather than two.

For each of the curriculum policies considered and for each of the experiments described above, we report the mean returns, success rates and failure rates at the end of the curriculum over the 10 students as well as their mean number of failures during training  in \cref{tab:landerCI}. Notice that the fact that the rates do not sum to $100\%$ is due to timeouts.The confidence intervals in \cref{tab:TwoLayeredNoiselessStudentCI,tab:TwoLayeredNoisyStudentCI,tab:OneLayeredStudentCI} are obtained by optimizing 3 curriculum policies independently with different random seeds and repeating the evaluation procedure for each one.

\section{Proof} \label{sec:app:proof}
In this section, we provide proofs for \cref{prop:feasibility,prop:learning_safety}.\

\SafeDeployment*
\begin{proof}
    The main idea of the proof is to show that the constraints in Equation \cref{eq:CMDP_intervention}, which are based on expectations with respect to $\rho_i^\pi$, are stricter than the constraint in Equation \cref{eq:CMDP}, which are based on expectations with respect to $\rho^\pi$. To this end, we need to distinguish between trajectories, or segments thereof, that have the same probability under $\rho^\pi$ and $\rho^\pi_i$ for any $\pi$ from those that do not.
    Let us denote with $\xi=(s_0,s_1,\ldots,s_T)$, a generic trajectory in $\M$ and with $\Xi$ the set of all possible trajectories in $\M$ (this is the set that the distributions $\rho$ and $\rho_i$ are defined over). Moreover, for a given set of trigger states $\D_i$, we indicate the set of trajectories where at least one state belongs to $\D_i$ with $\Xi_{\D_i}=\{\xi \in \Xi~|\xi \cap \D_i \neq \emptyset\}$ and with $\Xi_{\D_i}^C=\Xi \setminus \Xi_{\D_i}$ its complement. With this notation, the constraint $\mathbb{E}_{\rho^\pi_i} \sum_{t=0}^T \mathbb{I}(s_t \in \D_i) \leq \tau_i$ is equivalent to $\sum_{\xi \in \Xi} \rho^\pi_i (\xi)|\xi \cap \D_i| \leq \tau_i$. Therefore, for a $\pi \in \Pi_{\M_i}$, we know that:
    \begin{equation}
        \sum_{\xi \in \Xi_{\D_i}} \rho^\pi_i (\xi)|\xi \cap \D_i| + \sum_{\xi \in \Xi_{\D_i}^C} \rho^\pi_i (\xi)|\xi \cap \D_i| \leq \tau_i. \label{eq:eventual_safety:teacher_condition}
    \end{equation}
    Since, by definition, we know that $|\xi \cap \D_i|=0$ for all $\xi \in \Xi_{\D_i}^C$, \cref{eq:eventual_safety:teacher_condition} simplifies to 
    \begin{equation}
        \sum_{\xi \in \Xi_{\D_i}} \rho^\pi_i (\xi)|\xi \cap \D_i|  \leq \tau_i.
    \end{equation}
    Every trajectory $\xi \in \Xi_{\D_i}$ can be divided in two segments: $\xi_1=(s_0,s_1,\ldots,s_m)$, which contains all the states up to the first one in the sequence that belongs to $\D_i$, i.e., $s_0,s_1,\ldots,s_{m-1} \not \in \D_i$ and $s_m \in D_i$, and $\xi_2=(s_{m+1},\ldots,s_T)$, which contains the remaining part of the trajectory. Thus, we can say:
    \allowdisplaybreaks
    \begin{align}
        \tau_i &\geq \sum_{\xi \in \Xi_{\D_i}} \rho^\pi_i (\xi)|\xi \cap \D_i|,\\
        & = \sum_{(\xi_1,\xi_2) \in \Xi_{\D_i}} \rho^\pi_i (\xi_1,\xi_2)|(\xi_1,\xi_2) \cap \D_i|,\\
        & \geq \sum_{(\xi_1,\xi_2) \in \Xi_{\D_i}} \rho^\pi_i (\xi_1,\xi_2)|\xi_1 \cap \D_i|,\\
        & = \sum_{\xi_1 \in \Xi_{\D_i}} \rho^\pi_i (\xi_1)|\xi_1 \cap \D_i|\sum_{\xi_2 \in \Xi_{\D_i}} \rho^\pi_i(\xi_2|\xi_1),\\
        & = \sum_{\xi_1 \in \Xi_{\D_i}} \rho^\pi (\xi_1)|\xi_1 \cap \D_i|\sum_{\xi_2 \in \Xi_{\D_i}} \rho^\pi(\xi_2|\xi_1),
        \label{eq:eventual_safete:change_distribution}
        \\
        & = \sum_{(\xi_1,\xi_2) \in \Xi_{\D_i}} \rho^\pi (\xi_1,\xi_2)|\xi_1 \cap \D_i|,
        \label{eq:eventual_safete:change_di_d}
        \\
        & \geq \sum_{(\xi_1,\xi_2) \in \Xi_{\D_i}} \rho^\pi (\xi_1,\xi_2)|(\xi_1,\xi_2) \cap \D| = \sum_{\xi \in \Xi_{\D_i}} \rho^\pi (\xi)|\xi \cap \D|. \label{eq:eventual_safety:teacher_condition_transformed}
    \end{align}
    In the previous chain of inequalities, \cref{eq:eventual_safete:change_distribution} holds because $\rho$ and $\rho_i$ are the same for the portion of the trajectory before the teacher intervenes for the first time, i.e., $\xi_1$, and because $\sum_{\xi_2 \in \Xi_{\D_i}} \rho^\pi_i(\xi_2|\xi_1)=\sum_{\xi_2 \in \Xi_{\D_i}} \rho^\pi(\xi_2|\xi_1)=1$. Furthermore, \cref{eq:eventual_safete:change_di_d} holds because $|\xi_1\cap \D_i|=1$ by definition of $\xi_1$ and because $|(\xi_1,\xi_2) \cap \D|\leq 1$ since the states in $\D$ are terminal.
    
    Moreover, for a $\pi \in \Pi_{\M_i}$, we know that:
    \begin{equation}
        \sum_{\xi \in \Xi_{\D_i}} \rho^\pi_i (\xi)|\xi \cap \D| + \sum_{\xi \in \Xi_{\D_i}^C} \rho^\pi_i (\xi)|\xi \cap \D| \leq \kappa_i. \label{eq:eventual_safety:danger_condition}
    \end{equation}
    Since the teacher does not modify the original dynamics unless the student enters a trigger state, we know that $\rho^\pi_i=\rho$ for every $\xi \in \Xi_{\D_i}^C$. Therefore, \cref{eq:eventual_safety:danger_condition} becomes:
    \begin{equation}
        \sum_{\xi \in \Xi_{\D_i}} \rho^\pi_i (\xi)|\xi \cap \D| + \sum_{\xi \in \Xi_{\D_i}^C} \rho^\pi (\xi)|\xi \cap \D| \leq \kappa_i. \label{eq:eventual_safety:danger_condition_transformed}
    \end{equation}
    By summing \cref{eq:eventual_safety:teacher_condition_transformed,eq:eventual_safety:danger_condition_transformed}, we obtain:
    \begin{equation}
        \sum_{\xi \in \Xi_{\D_i}} \rho^\pi (\xi)|\xi \cap \D| + \sum_{\xi \in \Xi_{\D_i}^C} \rho^\pi (\xi)|\xi \cap \D| \leq \tau_i + \kappa_i - \sum_{\xi \in \Xi_{\D_i}} \rho^\pi_i (\xi)|\xi \cap \D| \leq \tau_i + \kappa_i,
    \end{equation}
    which means $\mathbb{E}_{\rho^\pi} \sum_{t=0}^T \mathbb{I}(s_t \in \D) \leq \tau_i + \kappa_i\leq \kappa$, which implies that $\pi \in \Pi_\M$.
\end{proof}

\SafeTraining*
\begin{proof}
    During learning, the student may use any policy $\pi$ from the set of all possible policies for the origianl environment $\M$, $\overline{\Pi}_\M$ (which includes unfeasible and, therefore, unsafe policies). However, during training, for any intervention $i$ the student transitions according to the dynamics $\Pp_i$ rather than $\Pp$. Therefore, we aim to show that $\mathbb{E}_{\rho^\pi_i}\sum_{t=0}^T \mathbb{I}(s_t \in D) \leq \kappa$ for all $\pi \in \overline{\Pi}_\M$. If $s_t\in \Ss\setminus \D$, we can either have (\textit{i}) $s_t\in \Ss\setminus \D_i$ or (\textit{ii}) $s_t \in \D_i \setminus \D$. Let us consider these two cases separately. In (\textit{i}), we know that $\Pp(s_{t+1}|s_t,a_t)=0$ for any action $a_t$ and any $s_{t+1} \in \D$ by assumption. Moreover, since $\Pp(s_{t+1}|s_t,a_t)=\Pp_i(s_{t+1}|s_t,a_t)$ for all $s_t \not \in \D_i$, we have $\Pp_i(s_{t+1}|s_t,a_t)=0$ for all $s_t \in \Ss \setminus \D_i$ and $s_{t+1} \in \D$. In case (\textit{ii}), we know that $\Pp_i(s_{t+1}|s_t,a_t)=0$ for all $s_{t+1} \in \D_i \supseteq \D$ and $s_t \in \D_i$ by definition of reset distribution. Therefore, we have shown that for all $s_t \in \Ss \setminus \D$, $a_t \in \A$ and $s_{t+1} \in \D$, we have $\Pp_i(s_{t+1}|s_t,a_t)=0$. As a consequence, the only way the student can reach an unsafe state under the dynamics $\Pp_i$ is if $s_0 \in \D$, which corresponds to starting an episode in an unsafe terminal state, which only depends on the initial state distribution and not on the policy. If the initial state distribution is such that it is not possible for the student to start an episode in an unsafe state, then we have $\mathbb{E}_{\rho^\pi_i}\sum_{t=0}^T \mathbb{I}(s_t \in D) = 0 \leq \kappa$ for every $\pi \in \overline{\Pi}_\M$. Otherwise, we have assumed $\kappa$ to be such that the problem in \cref{eq:CMDP} is feasible. Therefore, it must be such that all the trajectories starting with $s_0 \in \D$ can be tolerated. 
\end{proof}

\end{document}